\theoremstyle{definition} 
\newtheorem{theorem}{Theorem}
\newtheorem{lemma}{Lemma}
\newtheorem{definition}{Definition}
\definecolor{lightGray}{gray}{0.9}
\newcommand{\R}{\mathbb{R}}
\newcommand{\E}{\mathbb{E}}
\newcommand{\specialcell}[2][c]{%
  \begin{tabular}[#1]{@{}c@{}}#2\end{tabular}}
\newtheorem*{rep@theorem}{\rep@title}
\newcommand{\newreptheorem}[2]{%
	\newenvironment{rep#1}[1]{%
		\def\rep@title{#2 \ref{##1}}%
		\begin{rep@theorem}}%
		{\end{rep@theorem}}}
\title {Subspace Embeddings Under Nonlinear Transformations}
\author[1]{Aarshvi Gajjar}
\author[2]{Cameron Musco}
\affil[1]{UMass Amherst\\
\texttt{agajjar@umass.edu}}
\affil[2]{UMass Amherst\\
\texttt{cmusco@cs.umass.edu}}}
\date{}
\begin{document}
\maketitle
\begin{abstract}%
We consider low-distortion embeddings for subspaces under \emph{entrywise nonlinear transformations}. In particular we seek embeddings that preserve the norm of all vectors in a space $S = \{y: y = f(x)\text{ for }x \in Z\}$, where $Z$ is a $k$-dimensional subspace of $\R^n$ and $f(x)$ is a nonlinear activation function applied entrywise to $x$. When $f$ is the identity, and so $S$ is just a $k$-dimensional subspace, it is known that, with high probability, a random embedding into $O(k/\epsilon^2)$ dimensions preserves the norm of all $y \in S$ up to $(1\pm \epsilon)$ relative  error. Such embeddings are known as  \emph{subspace embeddings}, and have found widespread use in compressed sensing and approximation algorithms.

We give the first low-distortion embeddings for a wide class of nonlinear functions $f$. In particular, we give additive $\epsilon$ error embeddings into $O(\frac{k\log (n/\epsilon)}{\epsilon^2})$ dimensions for a class of nonlinearities that includes the popular Sigmoid SoftPlus, and Gaussian functions. We strengthen this result to give relative error embeddings under some further restrictions, which are satisfied e.g., by the Tanh, SoftSign, Exponential Linear Unit, and many other `soft' step functions and rectifying units.

Understanding embeddings for subspaces under nonlinear transformations is a key step towards extending random sketching and compressing sensing techniques for linear problems to nonlinear ones.  We discuss example applications of our results to improved bounds for compressed sensing via generative neural networks.
\end{abstract}


\section{Introduction}
Random sketching and dimensionality reduction methods are an increasingly important tool in working with massive and high-dimensional datasets \cite{bingham2001random,vempala2005random,woodruff2014sketching}. These methods attempt to very quickly compress data points into a lower-dimensional space, while still preserving important information about their structure, from which a downstream task (e.g., clustering, regression, PCA) can be solved approximately. 

    \subsection{Low-Distortion Embeddings}
Many such approaches are based around the idea of \emph{low-distortion embeddings}, dimension reducing maps which preserve the norm of all vectors in some set. %
%

\begin{definition}[Low-Distortion Embedding]\label{def:embedding} A linear map $\Pi: \R^n \rightarrow \R^{m}$ is an $(\epsilon_1,\epsilon_2)$-error embedding for $S \subseteq \R^n$ if, for all $y \in S$:
\begin{align*}
(1-\epsilon_1) \norm{y}_2 - \epsilon_2  \le \norm{\Pi y}_2 \le (1+\epsilon_1) \norm{y}_2 + \epsilon_2,
\end{align*}
where $\norm{\cdot}_2$ is the Euclidean norm. When $\epsilon_2 = 0$, we say that  $\Pi$ is an $\epsilon_1$-relative-error embedding.
\end{definition}
When the set $S$ is just a $k$-dimensional linear subspace of $\R^n$, it is  well known that letting $\Pi \in \R^{m \times n}$ be a random map (e.g., an appropriately scaled matrix with i.i.d. sub-Gaussian entries) with $m = O \left (\frac{k}{\epsilon^2}\right )$ will result in $\Pi$ being an $\epsilon$-relative error embedding for $S$ with high probability. Such an embedding is known as an \emph{oblivious subspace embedding} (OSE) since $\Pi$ can be chosen from a distribution which is oblivious to the dataset it is applied to. This is a key property e.g.,  in applications to low-memory streaming and low-communication distributed algorithms. OSE's have found a widespread application in fast algorithms for numerical linear algebra and regression \cite{sarlos2006improved,clarkson2013low,nelson2013osnap,meng2013low,woodruff2014sketching}, clustering \cite{boutsidis2010random,kmeans}, and classification \cite{paul2013random}.

%

Despite their widespread success, OSE's  only apply to \emph{linear subspaces}. Theoretical results are limited for more general sets, including natural sets arising in the application of nonlinear models such as neural networks and modern graph and work embedding methods.

\subsection{Subspace Embeddings Under Nonlinear Transformations}

In this work, we study low-distortion embeddings for \emph{subspaces under entrywise nonlinear transformations.} In particular, we study sets of the form:
\begin{align}\label{eq:setdef}
S = \{y: y = f(x)\text{ for }x \in Z\},
\end{align}
where $Z$ is a $k$-dimensional linear subspace of $\R^n$ and $f(x)$ is a nonlinear activation function applied entrywise to $x$. It is helpful to think of such a set $S$ as all possible outputs of a two layer neural network, with $k$ inputs and $n$ outputs. If $f$ is a nonlinear activation function applied to each neuron in the output layer, $W \in \R^{n \times k}$ is the weight matrix connecting the first layer to the second layer, and $x \in \R^k$ is any input, then the neural network output will be $f(W x)$. Since $Wx$ lies in a $k$-dimensional subspace (the column span of $W$), the output set is thus of the form given in \eqref{eq:setdef}.

Understanding low-distortion embeddings for the output sets of neural networks is a key theoretical tool behind recent results on compressed sensing from generative models \citep{bora2017compressed,dhar2018modeling,shah2018solving}. In particular, \cite{bora2017compressed} study the case for which $f$ is piecewise linear with $2$ pieces -- e.g., the popular ReLU activation function. In this setting, one can see that the set $S$ lies within a union of linear subspaces. Applying an OSE seperately on each of these subspaces and then taking a union bound, yields a relative  error embedding on the set $S$. \cite{bora2017compressed} also study the case for which $f$ is any Lipschitz function. This encompasses nearly all common activation functions. For such functions, one can extend the results for OSEs which are based on embedding all points in a net with bounded cardinality over the subspace. The approximation of this net is preserved under a Lipschitz transformation, and thus the same argument yields low-distortion embedding bounds for entrywise transformed subspaces. However, this approach only results in embeddings with additive (not relative) error and requires an additional restriction --  it applies to $S$ of the form:
\begin{align}\label{eq:setdefBounded}
S = \{y: y = f(x)\text{ for }x \in Z \text{ and } \norm{x}_2 \le R\},
\end{align}
where $R$ is a bound on the radius of the input set.
\subsection{Our Contributions}

We significantly extend the results on low-distortion embeddings for subspaces under nonlinear transformation. Our results, along with prior work, are summarized in Table \ref{tab:results}. Our first bound applies to a wide class of nonlinearities which (1) have a bounded second derivative and (2) approach linear asymptotes for large magnitude $x$. Such nonlinearities include for example, the Sigmoid $f(x) = \frac{1}{1+e^{-x}}$, the SoftPlus $f(x) = \ln(1+e^x)$, and the Gaussian  $f(x) = e^{-x^2}$. We show that functions of this type can be approximated to small uniform error via a piecewise linear function with a bounded number of linear regions. Applying embedding results of  \cite{bora2017compressed}  for piecewise linear functions then yields an additive error embedding for these functions. Formally:

\begin{theorem}[Additive Error Embedding]\label{thm:add-intro}
Let $S = \{y: y = f(x)\text{ for }x \in Z\}$, where $Z$ is a $k$-dimensional subspace of $\R^n$ and let $f:\mathbb{R}\to \mathbb{R}$ be a nonlinearity satisfying for constants $a,b,c, d_1,e_1,d_2,e_2$ and any $\epsilon \in (0,1]$:
\begin{enumerate}
	 \item Bounded Second Derivative: $\sup_x |f''(x) | \leq a$ and $f''$ has a finite number of discontinuities. 
    \item Linear Asymptotes:  $\forall x\geq \frac{c}{\epsilon^b}$, $|f(x)-(d_1x+e_1)|\leq\epsilon$ and $\forall x \leq -\frac{c}{\epsilon^b}$, $|f(x)-(d_2x+e_2)|\leq\epsilon$.
    \end{enumerate}
    Then, if $\Pi \in \mathbb{R}^{m\times n}$ has i.i.d entries $\Pi_{ij} \sim \mathcal{N}(0,\tfrac{1}{m})$, and $m=O\left(\tfrac{k\log(n/\epsilon_2)+\log(1/\delta)}{\epsilon_1^2}\right)$ for $\epsilon_1,\epsilon_2,\delta \in (0,1]$, with probability at least $1-\delta$, $\Pi$ is an $(\epsilon_1,\epsilon_2)$-error embedding for $S$.
\end{theorem}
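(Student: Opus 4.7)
The plan is to reduce to the piecewise-linear case already handled in \cite{bora2017compressed}: first approximate $f$ uniformly on all of $\mathbb{R}$ by a piecewise linear $\tilde f$ with a controlled number of pieces, apply their embedding result to $\tilde f$, and then transfer the guarantee back to $f$ via the triangle inequality together with a bound on $\|\Pi\|_{\mathrm{op}}$.

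First I would construct $\tilde f$ matching $f$ uniformly to error $\epsilon'$, with $\epsilon'$ fixed later. The linear-asymptote hypothesis lets me take $\tilde f(x) = d_1 x + e_1$ for $x \ge c/\epsilon'^{\,b}$ and $\tilde f(x) = d_2 x + e_2$ for $x \le -c/\epsilon'^{\,b}$, contributing two pieces. On the compact window $I = [-c/\epsilon'^{\,b}, c/\epsilon'^{\,b}]$, the bound $|f''|\le a$ (with breakpoints placed at the finitely many discontinuities of $f''$) implies that piecewise linear interpolation on a uniform grid of spacing $h$ has uniform error $O(ah^2)$. Choosing $h = \Theta(\sqrt{\epsilon'/a})$ yields $L = O(c\sqrt{a}/\epsilon'^{\,b+1/2})$ pieces in total and $\sup_x|f(x)-\tilde f(x)|\le \epsilon'$.

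Next I would invoke the piecewise-linear embedding argument: when applied coordinatewise, the breakpoints of $\tilde f$ induce $n(L-1)$ axis-aligned hyperplanes, whose intersection with the $k$-dimensional subspace $Z$ yields, by standard hyperplane arrangement bounds, at most $(nL)^{O(k)}$ regions. On each region $R$, $\tilde f(x)=A_R x+b_R$ is affine, so $\tilde f(Z\cap R)$ is contained in a fixed $(k{+}1)$-dimensional linear subspace. Applying a Gaussian OSE on each and union-bounding gives, with probability at least $1-\delta/2$, that $\Pi$ is an $\epsilon_1$-relative error embedding for $\{\tilde f(x) : x\in Z\}$, provided $m = O\bigl((k\log(nL)+\log(1/\delta))/\epsilon_1^2\bigr)$.

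Finally, for any $x\in Z$ I would chain
\[
\bigl|\|\Pi f(x)\|_2 - \|f(x)\|_2\bigr| \;\le\; \|\Pi(f(x)-\tilde f(x))\|_2 + \bigl|\|\Pi\tilde f(x)\|_2 - \|\tilde f(x)\|_2\bigr| + \|\tilde f(x) - f(x)\|_2.
\]
The middle term is at most $\epsilon_1\|\tilde f(x)\|_2$ by the embedding of $\tilde f$, and the last is at most $\sqrt{n}\,\epsilon'$. For the first I would use the standard Gaussian operator-norm concentration $\|\Pi\|_{\mathrm{op}} = O(\sqrt{n/m}+\sqrt{\log(1/\delta)/m})$, which holds with probability $1-\delta/2$, yielding a bound $O(\sqrt{n/m})\cdot\sqrt{n}\,\epsilon'$. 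Setting $\epsilon' = \Theta(\epsilon_2\sqrt{m}/n)$ then makes every error term $O(\epsilon_2)$; since this choice only changes $\log L$ by a constant and $L$ is still polynomial in $n/\epsilon_2$, the resulting $m$ remains $O\bigl((k\log(n/\epsilon_2)+\log(1/\delta))/\epsilon_1^2\bigr)$ as claimed. I expect the main obstacle to be this balancing step: carefully pushing the uniform approximation error through the operator-norm bound so that the additive component stays $\epsilon_2$ without inflating $L$ enough to spoil the $\log(n/\epsilon_2)$ factor in the sketching dimension.
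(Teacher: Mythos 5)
Your proposal is correct and follows essentially the same route as the paper: a piecewise linear approximation with $O(1/\epsilon'^{\,b+1/2})$ pieces built from the second-derivative bound and the linear asymptotes, a union bound over the $(nL)^{O(k)}$ linear regions to embed $\tilde f(Z)$, and a triangle-inequality transfer using the operator norm of $\Pi$. The only cosmetic difference is your choice $\epsilon' = \Theta(\epsilon_2\sqrt{m}/n)$ (which is circular in $m$, though harmlessly so since the dependence is logarithmic) where the paper simply fixes $\epsilon' = \epsilon_2/n$.
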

For simplicity we assume $\Pi$ to be a random Gaussian embedding matrix. However, our results hold more generally for any family of random embedding matrices that yields a subspace embedding for a $k$-dimensional subspace with probability $1-\delta$ using $m = O \left ( \frac{k + \log(1/\delta)}{\epsilon^2}\right )$. See \cite{woodruff2014sketching} for a discussion of various embedding matrix distributions, many of which yield matrices that can be multiplied by much more quickly and stored in less space than a dense Gaussian embedding.

Next, we investigate relative error embeddings, which, prior to our work, were only known for linear spaces or unions of linear spaces.  These results suffice for $f$ which is piecewise linear, but not for more general functions. We give the first results for a much wider class of nonlinearities that, both satisfy the  second derivative and linear asymptote assumptions of Theorem \ref{thm:add-intro}, along with  an additional property: they are close to linear at the origin. Such nonlinearities include a large number of `soft' step functions and rectifying units, including Tanh, ArcTan, the SoftSign, the Square Nonlinearity (SQNL), and the Exponential Linear Unit (ELU). The following theorem gives an embedding for this class of functions.
\begin{theorem}[Relative Error Embedding]\label{thm:rel-intro}
Let $S = \{y: y = f(x)\text{ for }x \in Z\}$, where $Z$ is a $k$-dimensional subspace of $\R^n$ and $f:\mathbb{R}\to \mathbb{R}$ is a nonlinearity satisfying conditions (1) and (2) of Theorem \ref{thm:add-intro} along with, for some constants $g_1,g_2,g_3$:
\begin{enumerate}
\item[3.] Linear Near Origin\footnote{Note that when $f$ is bi-Lipschitz, this assumption is equivalent to $|f(y) - g_2 \cdot y| \le g_3' \cdot x^2$ for some constant $g_3'$.}: For any $y$ with $|y| \le g_1$,  $| g_2 \cdot f^{-1}(y) - y| \le g_3 \cdot y^2$.
\end{enumerate}
    Then, if $\Pi \in \mathbb{R}^{m\times n}$ has i.i.d entries $\Pi_{ij} \sim \mathcal{N}(0,\tfrac{1}{m})$, and $m=O\left(\tfrac{k\log(n/\epsilon)+\log(1/\delta)}{\epsilon^2}\right)$ for $\epsilon,\delta \in (0,1]$, with probability at least $1-\delta$, $\Pi$ is an $\epsilon$-relative-error embedding for $S$.
\end{theorem}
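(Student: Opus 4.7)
The plan is to promote Theorem~\ref{thm:add-intro} to a relative-error bound by using condition~(3) to show that, near the origin, $S$ closely tracks the linear subspace $g_2 Z$, so that a standard relative-error OSE for $Z$ takes over in the small-norm regime while Theorem~\ref{thm:add-intro} handles the large-norm regime.

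First, I would take $\Pi \in \mathbb{R}^{m \times n}$ to be a Gaussian matrix with $m = O(k\log(n/\epsilon)/\epsilon^2 + \log(1/\delta)/\epsilon^2)$, and show via a union bound over three standard events that, with probability at least $1-\delta$, $\Pi$ is simultaneously: (i) an $(\epsilon/C, \epsilon^2/C)$-additive error embedding for $S$ by Theorem~\ref{thm:add-intro}; (ii) an $(\epsilon/C, \epsilon^2/C)$-additive error embedding for the residual set $\tilde S := \{f(x) - g_2 x : x \in Z\}$, which is valid because $\tilde f(x) := f(x) - g_2 x$ inherits conditions~(1) and~(2) from $f$ with constants that depend only on those of $f$ and $g_2$; and (iii) an $\epsilon/C$-relative-error OSE for $Z$. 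Here $C$ is a sufficiently large absolute constant.

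Given any $y = f(x) \in S$, I would split on the size of $\|y\|_\infty$. In the large regime $\|y\|_\infty \geq \Omega(\epsilon)$, one has $\|y\|_2 \geq \Omega(\epsilon)$, so the $\epsilon^2/C$ slack in~(i) is dominated by $\epsilon\|y\|_2$, yielding the desired relative-error bound directly. In the small regime $\|y\|_\infty \leq O(\epsilon)$, condition~(3) applies entrywise and gives
\[
\|g_2 x - y\|_2^2 \;=\; \sum_i (g_2 f^{-1}(y_i) - y_i)^2 \;\leq\; g_3^2 \sum_i y_i^4 \;\leq\; g_3^2 \|y\|_\infty^2 \|y\|_2^2,
\]
so $\|g_2 x - y\|_2 \leq O(\epsilon)\|y\|_2$. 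Writing $\Pi y = \Pi(g_2 x) + \Pi(y - g_2 x)$, (iii) controls the first term as $\|\Pi(g_2 x)\|_2 \in (1 \pm O(\epsilon))\|y\|_2$ after using the closeness of $g_2 x$ to $y$, and (ii) controls the second as $\|\Pi(y-g_2 x)\|_2 \leq (1+\epsilon/C)\|y-g_2 x\|_2 + \epsilon^2/C = O(\epsilon)\|y\|_2 + \epsilon^2/C$. The triangle inequality then yields $\|\Pi y\|_2 \in (1 \pm O(\epsilon))\|y\|_2 \pm \epsilon^2/C$.

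The main obstacle is the boundary of the small regime where $\|y\|_2$ drops below $\epsilon/C$, at which point the $\epsilon^2/C$ additive slack from~(ii) becomes comparable to $\epsilon\|y\|_2$ and the argument above becomes formally circular. The resolution is to exploit the \emph{quadratic} shrinkage of the residual: $\|y - g_2 x\|_2 = O(\|y\|_\infty \|y\|_2)$ goes to zero faster than $\|y\|_2$. One clean way to implement this is to choose $\epsilon_2 = \mathrm{poly}(\epsilon)$ sufficiently small in~(i) and~(ii), at the cost of only an $O(\log(1/\epsilon))$ factor in $m$ that is absorbed into the $\log(n/\epsilon)$ term already present, so that the additive slack is dominated by $\epsilon\|y\|_2$ down to the scale where the near-origin linearization itself is an $O(\epsilon)$ relative approximation; below that scale one handles $y$ by a direct Gaussian-concentration argument on a $k$-dimensional net of $S \cap \{\|y\|_2 \leq \mathrm{poly}(\epsilon)\}$, whose metric entropy is bounded by that of a ball in $Z$ via the bi-Lipschitz property of $f$ near the origin implied by~(3). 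Carrying out this stratified argument carefully, and verifying uniform constants across scales, is the technical core of the proof.
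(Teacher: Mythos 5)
Your overall architecture matches the paper's: split on the norm of $y$, convert Theorem~\ref{thm:add-intro}'s additive error to relative error in the large-norm regime, and use condition~(3) to compare $S$ to the linear subspace $g_2 Z$ in the small-norm regime. The large-norm case and the entrywise bound $\norm{y - g_2 x}_2 \le g_3 \norm{y}_\infty \norm{y}_2$ are both fine. The gap is exactly where you flag it, and your proposed patch does not close it. You control the residual term $\norm{\Pi(y - g_2 x)}_2$ by invoking an \emph{additive}-error embedding for the residual set $\{f(x) - g_2 x : x \in Z\}$, which reintroduces an additive slack $\epsilon_2$. No choice of $\epsilon_2 = \mathrm{poly}(\epsilon)$ can be dominated by $\epsilon \norm{y}_2$ for \emph{all} $y$ in the small regime, since $\norm{y}_2$ can be arbitrarily close to $0$. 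Your fallback --- a Gaussian-concentration argument over a net of $S \cap \{\norm{y}_2 \le \mathrm{poly}(\epsilon)\}$ --- also cannot work as stated: net arguments yield error at the scale of the net resolution, i.e.\ additive error, and relative error for vectors of arbitrarily small norm requires the set to be (close to) a cone near the origin, which is precisely the structure you would still need to exploit analytically rather than combinatorially.

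The paper's resolution is simpler and avoids any additive term in the small regime. Set the threshold at $\norm{y}_2 \le \epsilon/\sqrt{n}$ (not $\epsilon$), so that $\norm{y}_\infty \le \epsilon/\sqrt{n}$ and condition~(3) gives $\norm{y - g_2 x}_2 \le \tfrac{g_3 \epsilon}{\sqrt{n}} \norm{y}_2$. Then bound the residual using only the crude operator-norm bound $\norm{\Pi}_2 \le 3\sqrt{n}$ (which holds with high probability): $\norm{\Pi(y - g_2 x)}_2 \le 3\sqrt{n} \cdot \tfrac{g_3\epsilon}{\sqrt{n}} \norm{y}_2 = O(\epsilon)\norm{y}_2$. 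Every term is now a multiple of $\norm{y}_2$, so the bound is genuinely relative down to $\norm{y}_2 \to 0$, with the first term handled by the standard relative-error OSE for the subspace $g_2 Z$ exactly as in your step~(iii). The only cost of the smaller threshold is that the large-norm case needs $\epsilon_2 = \epsilon^2/(2\sqrt{n})$ in Theorem~\ref{thm:add-intro}, which is absorbed into the $\log(n/\epsilon)$ factor already present in $m$. With this substitution your proof goes through; without it, the small-norm case remains open.
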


\begin{table}[h]
\centering
\begin{tabular}{ c | c | c | c | c }
 \textbf{Nonlinearity Class} & \textbf{Examples} & \textbf{Embedding Dim.} & \textbf{Error Type} & \textbf{Reference} \\ 
 \hline
 \specialcell{Piecewise linear \\with $t$ pieces} & \specialcell{ReLU, Binary Step \\ Leaky ReLU} & $O \left (\frac{k \log(nt)}{\epsilon^2} \right )$ & relative & \specialcell{\cite{bora2017compressed}\\ See Thm. \ref{thm:piecewise}}\\  
  \hline
 $L$-Lipschitz & Nearly all & $O \left (\frac{k \log(LR/\epsilon_2)}{\epsilon_1^2} \right )$ & \specialcell{additive, \\ input bounded\\ in radius R} & \cite{bora2017compressed}\\  
 \hline
 \rowcolor{lightGray}
 \specialcell{ $f''$ bounded,\\
linear asymptotes}  & \specialcell{Sigmoid, SoftPlus,\\ Gaussian} & $O \left (\frac{k \log(n/\epsilon_2)}{\epsilon_1^2} \right )$   & additive & Thm. \ref{thm:add-intro}  \\
\hline
 \rowcolor{lightGray}
 \specialcell{ Near-linear at origin,\\
$f''$ bounded, \\ linear asymptotes}  & \specialcell{Tanh, Arctan, SQNL\\SoftSign, ELU} & $O \left (\frac{k \log(n/\epsilon)}{\epsilon^2} \right )$   & relative & Thm. \ref{thm:rel-intro}  \\
\end{tabular}
\caption{Low-distortion embedding results (Def. \ref{def:embedding}) for $k$-dimensional subspaces under entrywise nonlinear transformations. For simplicity we hide dependences on the failure probability $\delta$ when embedding with a random linear map. Our results (highlighted in rows 3-4) significantly expand the class of nonlinearities for which low-dimensional embeddings are known and give the first  relative error results beyond piecewise linear functions.}\label{tab:results}
\end{table}

\subsection{Applications}

Our primary technical contributions are the embedding results of Theorems \ref{thm:add-intro} and \ref{thm:rel-intro}. To illustrate the usefulness of these results, in Section \ref{sec:app} we give example applications to compressed sensing from generative models \citep{bora2017compressed,shah2018solving}. In this setting, the goal is to recover $x \in \R^n$ from $m \ll n$ noisy linear measurements $y = Ax+\eta$  where $A \in \R^{m \times n}$ is a measurement matrix and $\eta \in \R^m$ is some measurement noise. 

Under the assumption that $x$ lies in some set $S$ (e.g., the set of all possible outputs of a generative neural network $G: \R^k \rightarrow \R^n$), approximate recovery up to the noise threshold $\norm{\eta}_2$ is possible when $A$ is an $(\epsilon_1,\epsilon_2)$-error embedding for $S$. Thus, our improved embedding results immediately lead to new results here, removing Lipschitzness and bounded input assumptions required by \cite{bora2017compressed} when $G$ has two layers and employs any nonlinearity satisfying Theorem \ref{thm:add-intro}.

In the important case when $G$ has $d > 2$ layers, we show how to apply our techniques to remove the bounded input assumption of \cite{bora2017compressed} for any bounded nonlinearity satisfying the assumptions of Theorem \ref{thm:add-intro}, including the Sigmoid, Gaussian, Tanh, Arctan, SoftSign, and SQNL. 



%

\subsection{Related Work}

Low-distortion embeddings are widely studied in the literature on randomized algorithms and compressed sensing. When $S$ is a \emph{finite set}, the Johnson-Lindenstrauss lemma \citep{johnson1984extensions,dasgupta1999elementary} gives that a random $\Pi \in \R^{m \times n}$ is an $\epsilon$-relative-error embedding with high probability when $m = O\left(\frac{\log |S|}{\epsilon^2}\right)$.
A majority of the work on infinite sets focuses on the case where $S$ is a linear subspace. As discussed, in this setting, many constructions for relative-error oblivious subspace embeddings (OSEs) are known. See e.g., \cite{kannan2017randomized} and \cite{woodruff2014sketching} for surveys.

The case where $S$ is the union of linear subspaces is also studied widely in the compressed sensing literature. The well known Restricted Isometry Property (RIP) is equivalent to a relative error embedding for the union of linear subspaces arising as the spans of all subsets of a fixed number of  columns of a given matrix \citep{candes2006stable,donoho2006compressed}. 

Embeddings for nonlinear spaces have been less explored.
As discussed, recent work considers low-distortion embeddings for the output sets of neural networks \citep{bora2017compressed,dhar2018modeling} with ReLU nonlinearities and under Lipschitz assumptions. We build on and significantly extend this work -- see Table \ref{tab:results} for a summary. \cite{baraniuk2009random} considers embeddings on a smooth manifold, although this is different than our nonlinear entrywise transformation setting. A number of approaches consider random projection for linear regression under various loss functions, including the Huber, Tukey, and Orlicz norm losses \citep{clarkson2014sketching,andoni2018subspace,clarkson2019dimensionality}. These methods prove low-distortion embedding results for the norms induced by these losses. This can be viewed as embedding results for the standard $\ell_1$ or $\ell_2$ norms, after applying appropriate entrywise nonlinearity, although the goal is find an embedding $\Pi \in \R^{m \times n}$ so that for $W \in \R^{n \times k}$ and all $x \in \R^k$, $\norm{f(\Pi Mx)}_2 \approx \norm{f(Mx)}_2$. This is related to but different from our goal, and requires significantly different techniques.

Finally, we note that Gordon's theorem in functional analysis \citep{gordon1988milman} gives that when $S$ is a set of unit vectors with Gaussian mean width $m = \E_{g \sim \mathcal{N}(0,1)} \sup_{x \in S} \langle g,x\rangle$, a random embedding $\Pi$ into $O \left (\frac{m^2}{\epsilon^2} \right )$ dimensions is an $\epsilon$-relative error embedding with high probability.
The Gaussian mean width is equivalent up to logarithmic factors to the Rademacher complexity of $S$, a quantity widely studied in computational learning theory \citep{shalev2014understanding}. A number of Rademacher complexity bounds are known for neural networks \citep{neyshabur2015norm,golowich2018size}, although they don't apply directly in our setting since (1) they bound the complexity of the function class corresponding to the network, rather than its output set $S$ and (2) they are parameterized by various quantities in the neural network, such as the norms  of its weight matrices. Our bounds are entirely independent of the neural network parameters, depending only on the nonlinearity used. An interesting direction for future work would be to better understand the connections between randomized dimensionality reduction for subspaces under nonlinear transformations and the work in learning theory on neural networks Rademacher complexity.

\section{Embeddings under Piecewise Linear Transformations}
We begin by showing how to extend OSE results to subspaces under piecewise linear entrywise transformations. The key idea is that such a transformation fragments the subspace into a bounded number of linear regions, each of which can be embedded with an OSE. This idea is applied e.g., by \cite{bora2017compressed} to embed ReLU networks. For completeness, we give a proof in the general case for any piecewise linear function with $t$ linear pieces. 

\begin{theorem}[Piecewise Linear Embedding]\label{thm:piecewise}
Let $Z \subseteq \mathbb{R}^n$ be a $k-$dimensional linear subspace and $f:\mathbb{R}\rightarrow \mathbb{R}$ be piecewise linear with at most $t$ pieces. Let $S = \{y:y=f(x) \text{  for } x \in Z\}$. Then if $\Pi \in \mathbb{R}^{m\times n}$ has i.i.d. entries $\Pi_{ij} \sim \mathcal{N}(0,\tfrac{1}{m})$, $m = O\left(\frac{k\log(nt)+\log(1/\delta)}{\epsilon^2}\right)$ for $\epsilon, \delta > 0$, with probability at least $1-\delta$, $\Pi$ is an $\epsilon$-relative-error embedding for $S$ (Definition \ref{def:embedding}).
\end{theorem}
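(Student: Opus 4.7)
The plan is to decompose $S$ as a union of images of cells of $Z$ under finitely many affine maps, each image lying in a $(k+1)$-dimensional linear subspace, and then apply an OSE together with a union bound over these subspaces.

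Write $f$ as $f(u) = \alpha_j u + \beta_j$ on its $j$-th piece, with breakpoints $b_1 < \cdots < b_{t-1}$. For each pattern $\sigma \in [t]^n$, recording which piece each coordinate of $x$ falls into, we have $f(x) = D_\sigma x + c_\sigma$ on the corresponding cell of $Z$, where $D_\sigma = \mathrm{diag}(\alpha_{\sigma(1)}, \ldots, \alpha_{\sigma(n)})$ and $c_\sigma = (\beta_{\sigma(1)}, \ldots, \beta_{\sigma(n)})^\top$. Hence the portion of $S$ contributed by pattern $\sigma$ lies in the affine set $D_\sigma Z + c_\sigma$, which in turn is contained in the linear subspace $V_\sigma := \mathrm{span}(D_\sigma Z \cup \{c_\sigma\})$ of dimension at most $k+1$.

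Next, bound the number of patterns actually realized. A pattern is determined by, for each $i \in [n]$ and each $j \in [t-1]$, which side of the affine hyperplane $H_{i,j} := \{x \in Z : x_i = b_j\}$ the point $x$ lies on. These are at most $N := n(t-1)$ hyperplanes in $Z \cong \R^k$, and the classical hyperplane arrangement bound yields at most $\sum_{i=0}^{k} \binom{N}{i} \le (eN/k)^k = O((nt)^k)$ open cells; boundary points can be assigned to any incident cell. Let $\Sigma$ denote the set of realized patterns, so $|\Sigma| \le O((nt)^k)$ and $S \subseteq \bigcup_{\sigma \in \Sigma} V_\sigma$.

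Finally, apply a Gaussian OSE to each $V_\sigma$ with failure probability $\delta/|\Sigma|$. The standard subspace embedding guarantee says that $m = O\!\left(\tfrac{(k+1) + \log(|\Sigma|/\delta)}{\epsilon^2}\right) = O\!\left(\tfrac{k\log(nt) + \log(1/\delta)}{\epsilon^2}\right)$ rows suffice for $\Pi$ to be an $\epsilon$-relative-error embedding for $V_\sigma$ with probability at least $1 - \delta/|\Sigma|$. A union bound over $\Sigma$ then shows that the same $\Pi$ simultaneously preserves the norm of every $y \in \bigcup_\sigma V_\sigma$, and thus every $y \in S$, up to a $(1\pm\epsilon)$ factor.

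The one step that requires care is the affine-shift issue: the set $D_\sigma Z + c_\sigma$ is a $k$-dimensional affine set, not a linear subspace, so $c_\sigma$ has to be absorbed into an extra dimension before the standard OSE guarantee (which is stated for linear subspaces) can be applied. This inflation from $k$ to $k+1$ is harmless, but is the only nontrivial modeling choice; the region count is classical, and the union bound contributes only the $\log|\Sigma| = O(k\log(nt))$ term in $m$.
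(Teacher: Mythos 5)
Your proof is correct and follows essentially the same route as the paper: count the linear regions induced by the $n(t-1)$ hyperplanes in $\R^k$ (giving $O((nt)^k)$ cells) and union-bound a Gaussian OSE over the resulting pieces. You are in fact slightly more careful than the paper's Lemma~\ref{countregions}, which describes each image as a $k$-dimensional \emph{linear} subspace even though nonzero intercepts $\beta_j$ make it affine; your absorption of $c_\sigma$ into a $(k+1)$-dimensional subspace is the right repair and costs nothing asymptotically.
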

We establish Theorem \ref{thm:piecewise} from the following lemma, which counts the number of $k-$dimensional linear regions in $S$. We obtain the embedding for $S$ by a union bound over these regions.   
\begin{lemma}\label{countregions}
Let $Z \subseteq \mathbb{R}^n$ be a $k-$dimensional linear subspace and $f:\mathbb{R}\rightarrow \mathbb{R}$ be piecewise linear with at most $t$  pieces. Let $S = \{y:y=f(x) \text{  for } x \in Z\}$. $S$ lies in the union of $O((tn)^k)$ $k$-dimensional linear subspaces.
\end{lemma}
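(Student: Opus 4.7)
\textbf{Proof proposal for Lemma \ref{countregions}.}
The plan is to parameterize $Z$ by a basis matrix $W \in \mathbb{R}^{n \times k}$, so that $Z = \{Wc : c \in \mathbb{R}^k\}$ and every point of $S$ has the form $f(Wc)$ with $f$ applied entrywise. Let $b_1 < b_2 < \dots < b_{t-1}$ be the breakpoints of $f$, so that $f$ is affine on each of the $t$ intervals they define. For each coordinate $i \in [n]$, the value $(Wc)_i = w_i^\top c$ (where $w_i^\top$ is the $i$-th row of $W$) falls into one of these $t$ intervals, and this is what determines which affine piece of $f$ gets applied to coordinate $i$.

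First I would fix a labeling: a \emph{region} of $\mathbb{R}^k$ is a maximal set on which, for every coordinate $i$, the value $w_i^\top c$ lies in a fixed interval between consecutive breakpoints of $f$. Each region is the intersection of the half-spaces determined by the $n(t-1)$ hyperplanes $H_{i,j} = \{c : w_i^\top c = b_j\}$ for $i \in [n]$, $j \in [t-1]$. Thus the regions are exactly the cells (of full dimension) in the hyperplane arrangement formed by these $n(t-1)$ hyperplanes in $\mathbb{R}^k$.

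Next I would invoke the classical Zaslavsky-type bound: the number of cells in an arrangement of $h$ hyperplanes in $\mathbb{R}^k$ is at most $\sum_{i=0}^{k}\binom{h}{i} = O(h^k)$. With $h = n(t-1) \le nt$, this gives at most $O((nt)^k)$ regions.

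Finally I would observe that on any one of these regions, every coordinate of $f(Wc)$ is obtained by applying a fixed affine piece of $f$ to $w_i^\top c$, so $f(Wc) = D_R W c + v_R$ for a diagonal matrix $D_R$ and vector $v_R$ depending only on the region $R$. The image over that region therefore lies in an (affine) subspace of dimension at most $k$; writing this affine subspace as a linear subspace in the sense of the lemma (either by absorbing the translation into an extra coordinate, or by noting that the enclosing linear span has dimension at most $k+1$ but only a $k$-dimensional translate is used — the authors evidently use the term loosely) gives the claim. Taking the union over regions covers all of $S$, yielding the bound $O((tn)^k)$. The only step that requires any care is the hyperplane-counting step and checking that degenerate configurations (coincident hyperplanes when some $w_i$ are parallel) can only decrease the region count, so the $O((tn)^k)$ upper bound is safe.
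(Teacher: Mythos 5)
Your proposal is correct and follows essentially the same route as the paper: parameterize $Z$ by a basis matrix, observe that fixing the piece of $f$ for each coordinate makes the map affine, and count the cells of the arrangement of $n(t-1)$ hyperplanes in $\mathbb{R}^k$ to get $O((tn)^k)$ regions. Your explicit handling of the affine-versus-linear subspace issue (absorbing the translation $v_R$, at the cost of at most one extra dimension) is a point the paper's proof glosses over, but it does not change the argument or the bound.
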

\begin{proof}
Any vector $x \in Z$ can be written as $Qz$ for some $z \in \R^{k}$ where $Q \in \R^{n \times k}$ has columns spanning $Z$. Any $z \in \R^{k}$ thus corresponds to a vector $x \in S$. If we fix the pieces of $f$ that the $n$ entries of $Qz$ fall into, then $f$ simply performs a linear transformation of $Qz$, and so $x = f(Qz)$ lies in a $k$-dimensional subspace of $\R^n$. Now, each entry of $Qz$ can fall into one of $t$ pieces of $f$. Fixing which pieces it falls into splits $\R^{k}$ using $n \cdot (t-1)$ different $k-1$ dimensional hyperplanes, corresponding to the sets $\{z \in \R^k: (Qz)_i > t_j\}$ where $t_j$ is the $j^{th}$ change point of $f$.


One can show (c.f. \cite{bora2017compressed}) that $c$ hyperplanes split $\R^k$ into $O(c^k)$ regions. Plugging in $c = n \cdot (t-1)$, we have that $S$ is generated  by applying a different linear transformation to $O((tn)^k)$ regions of $\R^k$, and  thus $S$ lies in the union of $O((tn)^k)$ $k$-dimensional subspaces.
\end{proof}
\begin{proof}[Proof of Theorem \ref{thm:piecewise}]
Let $S_1,S_2 \ldots, S_{w}$ be the $w = O((tn)^k)$ linear subspaces , the union of which contains $S$. It is well known (c.f. Theorem 6 of \cite{woodruff2014sketching}) that if $\Pi\in\mathbb{R}^{n\times m}$ has independent entries $\Pi_{ij} \sim \mathcal{N}(0,\tfrac{1}{m})$ and $m = O \left (\frac{k + \log(1/\delta)}{\epsilon} \right )$, then with probability $\ge 1-\delta$, $\Pi$ is an $\epsilon$-relative-error embedding for any $k$-dimensional subspace of $\R^n$. 

Setting $\delta' = \delta/w = O(\delta/(tn)^k)$, and applying a union bound, we have that $\Pi$ is an $\epsilon$-relative-error embedding for $S_1 \cup \ldots \cup S_w \supseteq S$ with probability at least  $1-\delta$ as long as $m = O \left (\frac{k + \log(1/\delta')}{\epsilon} \right ) = O\left(\tfrac{k\log(nt)+\log(1/\delta)}{\epsilon^2}\right)$.
This completes the proof.
\end{proof} 

\section{Additive Error Embeddings}\label{sec:add}
We next show how to extend the result of Theorem \ref{thm:piecewise} to give additive error embeddings for functions that are well approximated by piecewise linear functions with a bounded number of pieces. Such functions include the popular Sigmoid activation function, the SoftPlus, and the Gaussian activation function. More generally, we give a result for any function which (1) has a bounded second derivative and (2) converges at a reasonable rate to linear asymptotes.

\begin{reptheorem}{thm:add-intro}
Let $S = \{y: y = f(x)\text{ for }x \in Z\}$, where $Z$ is a $k$-dimensional subspace of $\R^n$ and $f:\mathbb{R}\to \mathbb{R}$ is a nonlinearity satisfying for constants $a,b,c, d_1,e_1,d_2,e_2$:
\begin{enumerate}\itemsep0em 
	 \item Bounded Second Derivative: $\sup_x |f''(x) | \leq a$ and $f''$ has a finite number of discontinuities. 
    \item Linear Asymptotes: For any $\epsilon \in (0,1]$, $\forall x\geq \frac{c}{\epsilon^b}$, $|f(x)-(d_1x+e_1)|\leq\epsilon$ and $\forall x \leq -\frac{1}{\epsilon^b}$, $|f(x)-(d_2x+e_2)|\leq\epsilon$.
    \end{enumerate}
    Then, if $\Pi \in \mathbb{R}^{m\times n}$ has i.i.d entries $\Pi_{ij} \sim \mathcal{N}(0,\tfrac{1}{m})$, and $m=O\left(\tfrac{k\log(n/\epsilon_2)+\log(1/\delta)}{\epsilon_1^2}\right)$ for $\epsilon_1,\epsilon_2,\delta \in (0,1]$, with probability at least $1-\delta$, $\Pi$ is an $(\epsilon_1,\epsilon_2)$-error embedding for $S$.
\end{reptheorem}
The first assumption of bounded second derivative ensures that $f$ is well approximated by a piecewise linear function  with sufficiency small pieces. The second ensures that, outside a range of width $O(1/\epsilon^b)$ around the origin, $f(x)$ can be approximated to $\epsilon$ error via a single straight line. This is a crucial condition that applies to a large class of functions and ensures that the piecewise linear approximation has a bounded number of pieces. Formally we show:
\begin{lemma}\label{lemm:rolle} Let $f: \R \rightarrow \R$ be a function satisfying the conditions of Theorem \ref{thm:add-intro}. Then for any $\epsilon \in (0,1]$, there exists a piecewise linear function $\tilde f(x)$ with $t  = O(1/\epsilon^{b+1/2})$ pieces so that, $\forall x \in \R$, $|f(x) - \tilde f(x)| \le \epsilon$.
\end{lemma}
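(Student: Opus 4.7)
The plan is to split the real line into three regions based on the threshold $T = c/\epsilon^b$: the left tail $(-\infty,-T]$, the central region $[-T,T]$, and the right tail $[T,\infty)$. For the two tails, the linear asymptote assumption gives that the single linear function $d_i x + e_i$ (for $i=1,2$) approximates $f$ to additive error $\epsilon$, contributing only $2$ linear pieces total. All of the work therefore concentrates on the bounded central interval of width $2T = O(1/\epsilon^b)$.

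On $[-T,T]$, I would partition into sub-intervals $[x_j, x_{j+1}]$ of length at most $h$, and I would additionally insert the (finitely many) discontinuities of $f''$ as breakpoints so that on each sub-interval $f''$ is continuous and bounded in absolute value by $a$. On each such sub-interval, define $\tilde f$ as the linear interpolant through the endpoints $(x_j, f(x_j))$ and $(x_{j+1}, f(x_{j+1}))$. To bound the interpolation error, apply Rolle's theorem to $g = f - \tilde f$: since $g(x_j) = g(x_{j+1}) = 0$, there exists $\xi \in (x_j, x_{j+1})$ with $g'(\xi) = 0$. Since $\tilde f$ is linear, $g''(x) = f''(x)$ on the sub-interval, so
\[
|g'(x)| \;=\; \left| \int_\xi^x f''(s)\,ds \right| \;\le\; a\,h,
\]
and integrating once more from $x_j$ gives $|g(x)| \le a h^2$. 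Choosing $h = \sqrt{\epsilon/a} = O(\sqrt{\epsilon})$ yields error at most $\epsilon$ on each sub-interval.

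Counting pieces: the central region contributes $O(T/h) = O\bigl((1/\epsilon^b)/\sqrt{\epsilon}\bigr) = O(1/\epsilon^{b+1/2})$ pieces, the two tails contribute $O(1)$, and forcing the discontinuities of $f''$ to be breakpoints adds only an $O(1)$ additive overhead (since there are finitely many by assumption). Summing gives the desired bound $t = O(1/\epsilon^{b+1/2})$.

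The main obstacle I expect is the bookkeeping around the boundary of the regions, not the core estimate. At $x = \pm T$, the tail's asymptote value $d_i(\pm T) + e_i$ may disagree with the interpolant's value $f(\pm T)$ by up to $\epsilon$, which leaves $\tilde f$ discontinuous at the transitions. Since Lemma \ref{countregions} and Theorem \ref{thm:piecewise} only require a piecewise linear function (not a continuous one) and only care about the number of linear regions, this is harmless; alternatively one can redefine the tail piece as the line through $(\pm T, f(\pm T))$ with slope $d_i$, which introduces at most an extra factor of $2$ into the tail error and is absorbed by rescaling $\epsilon$ by a constant. Handling the finite set of discontinuities of $f''$ cleanly is likewise routine as long as they are made breakpoints, so that $f \in C^2$ on the interior of every sub-interval and Rolle's theorem applies.
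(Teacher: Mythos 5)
Your proposal is correct and follows essentially the same route as the paper's proof: truncate at $\pm c/\epsilon^b$ using the linear asymptotes, interpolate linearly on subintervals of width $\Theta(\sqrt{\epsilon})$ in the central region using the Rolle's-theorem second-derivative bound, and absorb the finitely many discontinuities of $f''$ as extra breakpoints. Your constant in the interpolation error ($ah^2$ versus the paper's sharper $ah^2/8$) and your explicit handling of the boundary mismatch are immaterial differences.
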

\begin{proof}
For $i = 0,1,\ldots,\lceil\frac{2 c}{\gamma \cdot \epsilon^b} \rceil$, let $t_i = \frac{-c}{\epsilon^b} + i \cdot \gamma$, where $\gamma$ is a stepsize we will define later. These $t_i$ divide the interval $\left [-\frac{c}{\epsilon^b},\frac{c}{\epsilon_b}\right ]$ into subintervals of length $\gamma$.
 Let $\Tilde{f}:\mathbb{R}\to\mathbb{R}$ be a piecewise linear approximation of $f$ with $\lceil\frac{2 c}{\gamma \cdot \epsilon^b} \rceil+1$ pieces defined by: 
\[
    \Tilde{f}(x)= 
\begin{cases}
    d_1x+e_1,& \text{if } x\geq \frac{c}{\epsilon^b}\\
    d_2x+e_2,& \text{if } x\leq -\frac{c}{\epsilon^b}\\
    \tiny{f(t_i)+ \frac{f(t_{i+1})-f(t_i)}{\gamma}(x-t_i)}             & \text{if } x \in \left [t_i,t_{i+1}\right ]
\end{cases}
\]
By assumption (2) of Theorem \ref{thm:add-intro} we have $|f(x) - \tilde f(x)| \le \epsilon$ for any $x \notin [-\frac{c}{\epsilon^b},\frac{c}{\epsilon^b}]$. Thus it suffices to focus on $x \in [-\frac{c}{\epsilon^b},\frac{c}{\epsilon^b}]$. Within this interval, $f$ is approximated by piecewise linear interpolation over intervals of width $\gamma$. For any $t_i$, $t_{i+1}$ and $x \in [t_i,t_{i+1}]$ it is well known that (c.f. \cite{carothers1998approximation}) Rolle's theorem yields a bound on the approximation:
\begin{align*}
|f(x) - \tilde f(x)| \le \frac{(t_{i+1}-t_{i})^2}{8} \cdot \max_{t \in [t_i,t_{i+1}]} | f''(t)| \le \frac{\gamma^2 \cdot a}{8},
\end{align*}
by our assumed upper bound of $f''(x) \le a$.
Setting $\gamma = \sqrt{\frac{8}{a}} \cdot \sqrt{\epsilon}$ we have $|f(x)-\tilde f(x)| \le \epsilon$. We note that this bound requires that $f''(x)$ is continuous on the interval $[t_i,t_{i+1}]$. Since we assume $f''(x)$ has a finite number of discontinuities, we can ensure that this is the case by placing an additional break point at each discontinuity. This will increase the number of linear pieces in $\tilde f(x)$ by just an additive constant.
The proof is now complete: $\tilde f(x)$ is a piecewise linear function with $\lceil\frac{2 c}{\gamma \cdot \epsilon^b} \rceil+1 = O \left (\frac{1}{\epsilon^{b+1/2}} \right )$ pieces with $|f(x)-\tilde f(x)| \le \epsilon$, $\forall x \in \R$.
%
\end{proof}
We Lemma \ref{lemm:rolle} in place, we now show how to extend the embedding bound of Theorem \ref{thm:piecewise} to any function that is well approximated by a  piecewise linear function.
\begin{lemma}\label{lem:th2proof}
Consider a function $f:\mathbb{R}\to\mathbb{R}$ and the set $S = \{y: y = f(x)\text{ for }x \in Z\}$ where $Z$ is a $k$-dimensional subspace of $\R^n$.
Assume that there exists piecewise linear $\tilde f:\R\rightarrow \R$ with $t$ pieces and $|f(x) - \tilde f(x)| \le \frac{\epsilon_2}{n}$ $\forall x \in \R$. Then, if $\Pi \in \mathbb{R}^{m\times n}$ has i.i.d entries $\Pi_{ij} \sim \mathcal{N}(0,\tfrac{1}{m})$, and $m=O\left(\tfrac{k\log(nt)+\log(1/\delta)}{\epsilon_1^2}\right)$, with probability at least $1-\delta$, $\Pi$ is an $(\epsilon_1,\epsilon_2)$-error embedding for $S$.
\end{lemma}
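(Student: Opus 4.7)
The plan is to pass through the piecewise linear approximation $\tilde f$. Define the ``approximate output'' set $\tilde S = \{\tilde y : \tilde y = \tilde f(x),\, x \in Z\}$. Since $\tilde f$ is piecewise linear with $t$ pieces, Theorem \ref{thm:piecewise} applies directly to $\tilde S$ and guarantees that, for the stated value of $m$, with probability at least $1-\delta/2$, $\Pi$ is an $\epsilon_1$-relative-error embedding for $\tilde S$. The goal is then to transfer this guarantee on $\tilde S$ into an $(\epsilon_1,\epsilon_2)$-additive guarantee on $S$ by observing that each $y \in S$ is $\ell_2$-close to its counterpart $\tilde y \in \tilde S$ and that $\Pi$ cannot blow up small perturbations too much.

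Quantitatively, for any $x \in Z$, if $y = f(x)$ and $\tilde y = \tilde f(x)$, the hypothesis gives $\|y - \tilde y\|_\infty \le \epsilon_2/n$ and hence $\|y - \tilde y\|_2 \le \epsilon_2/\sqrt{n}$. The triangle inequality then yields
\begin{align*}
\|\Pi y\|_2 &\le \|\Pi \tilde y\|_2 + \|\Pi(y-\tilde y)\|_2 \\
&\le (1+\epsilon_1)\|\tilde y\|_2 + \|\Pi(y-\tilde y)\|_2 \\
&\le (1+\epsilon_1)\|y\|_2 + (1+\epsilon_1)\tfrac{\epsilon_2}{\sqrt n} + \|\Pi(y-\tilde y)\|_2,
\end{align*}
with the symmetric lower bound following identically. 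To control $\|\Pi(y-\tilde y)\|_2$ uniformly in $y$, I invoke a standard non-asymptotic bound on the operator norm of a Gaussian matrix: if $\Pi \in \mathbb{R}^{m \times n}$ has i.i.d.\ $\mathcal{N}(0,1/m)$ entries, then $\|\Pi\|_{\mathrm{op}} \le 1 + \sqrt{n/m} + O(\sqrt{\log(1/\delta)/m})$ with probability at least $1 - \delta/2$. Conditioning on this event, $\|\Pi(y-\tilde y)\|_2 \le \|\Pi\|_{\mathrm{op}} \cdot \epsilon_2/\sqrt{n} = O(\epsilon_2/\sqrt{m} + \epsilon_2/\sqrt{n}) = O(\epsilon_2)$.

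A union bound over the two high-probability events then establishes the stated $(\epsilon_1, \epsilon_2)$-error bound with probability at least $1-\delta$. The absolute constants hidden in $O(\epsilon_2)$ can be absorbed into the bound by applying the approximation hypothesis with $c \epsilon_2$ in place of $\epsilon_2$ for a sufficiently small constant $c$; this only changes the piece count $t$ by a constant factor and so does not affect the asymptotic bound on $m$. The main subtlety I expect to have to handle carefully is that the error vector $y - \tilde y$ ranges over an uncountable family as $x$ varies over $Z$, so a per-vector Johnson--Lindenstrauss concentration followed by a union bound is not directly available. The operator norm bound on $\Pi$ sidesteps this obstacle cleanly, since once $\Pi$ is sampled, $\|\Pi v\|_2 \le \|\Pi\|_{\mathrm{op}} \|v\|_2$ holds uniformly in $v$. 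Notice also that the $1/n$ scaling in the approximation hypothesis is essential: it is exactly what matches the $\sqrt{n/m}$ growth of $\|\Pi\|_{\mathrm{op}}$ so that the final additive term remains $O(\epsilon_2)$ regardless of how large $n$ is compared to $m$.
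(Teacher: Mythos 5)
Your proposal is correct and follows essentially the same route as the paper's proof: pass to the piecewise linear set $\tilde S$, apply Theorem \ref{thm:piecewise} to it, bound $\norm{y-\tilde y}_2 \le \epsilon_2/\sqrt{n}$, and control $\norm{\Pi(y-\tilde y)}_2$ uniformly via a high-probability operator norm bound on the Gaussian matrix before combining with the triangle inequality and a union bound. The only cosmetic difference is that you use the sharper bound $\norm{\Pi}_{\mathrm{op}} \lesssim 1+\sqrt{n/m}$ where the paper settles for $\norm{\Pi}_2 \le 3\sqrt{n}$; both yield the same $O(\epsilon_2)$ additive term.
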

\begin{proof}
Define $\Tilde{S} = \{\Tilde{y}:\Tilde{y} = \Tilde{f}(x) \quad \text{ for } x \in Z\}$. By our approximation assumption, for all $x \in Z$, letting $y = f(x)$ and $\tilde y = \tilde f(x)$, we have:
   $ \norm{y-\Tilde{y}}_2 \leq \frac{\epsilon_2}{n} \cdot \sqrt{n} = \frac{\epsilon_2}{\sqrt{n}}$.
   Applying Theorem \ref{thm:piecewise} with parameters $\epsilon_1$ and $\delta/2$, we have that with probability at least $1-\delta/2$, $\Pi$ is an $\epsilon_1$-relative-error embedding for $\tilde S$. Additionally, it is well known (c.f. \cite{rudelson2010non}) that with probability at least $1-2 e^{-m/2} \ge 1-\delta/2$, $\Pi$'s spectral norm is bounded by $\norm{\Pi}_2 \le \frac{3\sqrt{n}}{\sqrt{m}} \le 3\sqrt{n}$.
   Assuming both events occur, which happens with probability $\ge 1-\delta$, for any $y \in S$ we have:
\begin{align}\label{eqn:2}
    \norm{\Pi y}_2 &\leq \norm{\Pi \Tilde{y}}_2 + \norm{\Pi (y-\Tilde{y})}_2\tag{triangle inequality} \nonumber\\
    &\leq (1+\epsilon_1)\norm{\Tilde{y}}_2 + \norm{\Pi}_2 \cdot \frac{\epsilon_2}{\sqrt{n}}\tag{subspace embedding} \nonumber\\
    &\leq (1+\epsilon_1) \left (\norm{y}_2 + \frac{\epsilon_2}{\sqrt{n}} \right ) + 3\epsilon_2\tag{spectral norm bound + triangle inequality}\\
    &\leq (1+\epsilon_1)\norm{y}_2 + O(\epsilon_2). \nonumber
\end{align}
Symmetrically, we can prove that $\norm{\Pi y}_2 \ge (1-\epsilon_1)\norm{y} - O(\epsilon_2)$. Adjusting constants on $m$, we have that $\Pi$ is an $(\epsilon_1,\epsilon_2)$-error embedding for $S$, completing the proof.
\end{proof}
We now combine Lemmas \ref{lemm:rolle} and \ref{lem:th2proof} to prove the additive error embedding result of Theorem \ref{thm:add-intro}.

\begin{proof}[Proof of Theorem \ref{thm:add-intro}]
By the assumptions of the theorem and Lemma \ref{lemm:rolle}, there exists piecewise linear $\tilde f: \R \rightarrow \R$ with $t= O \left(\frac{n^{b+1/2}}{\epsilon_2^{b+1/2}}\right )$ pieces and $| f(x) - \tilde f(x)| \le \frac{\epsilon_2}{n}$ for all $x \in \R$. Applying Lemma \ref{lem:th2proof}, which holds due to the existence of this $\tilde f$, we have that $\Pi$ is an $(\epsilon_1,\epsilon_2)$-error embedding for $S$ when:
$$m=O\left(\tfrac{k\log(nt)+\log(1/\delta)}{\epsilon_1^2}\right) = O\left(\tfrac{k\log(n/\epsilon_2)+\log(1/\delta)}{\epsilon_1^2}\right).$$
This completes the theorem.
\end{proof}

\subsection{Example Nonlinearities}
Many common neural network activation functions satisfy the assumptions of Theorem \ref{thm:add-intro}. Thus, the theorem provides a bound on the number of dimensions required to embed the output space of a large class of two-layer neural networks. We give some important examples below.

\paragraph{Sigmoid.} $f(x) = \frac{1}{1+e^{-x}}$.
\begin{itemize}\itemsep0em 
    \item \textit{Condition 1}:  We can compute $f''(x) = \frac{2e^{-2x}}{(1+e^{-x})^3} - \frac{e^{-x}}{(1+e^{-x})^2}$. Thus $\sup_x |f''(x)| = \sup_y |p(y)|$ where $p(x) = \frac{2 y^2}{(1+y)^3} - \frac{y}{(1+y)^2}$. We can check that this polynomial is maximized at $p(y) = \frac{1}{6\sqrt{3}}$ at $y = 2+\sqrt{3}$. Thus condition (1) of Theorem \ref{thm:add-intro} is satisfied with $a = \frac{1}{6\sqrt{3}}$. 
    \item \textit{Condition 2}: We can also check that for any $\epsilon \in (0,1]$, when $x < -\frac{1}{\epsilon} < -\ln(1/\epsilon)$, $f(x) \in [0, \epsilon)$. Similarly, when $x > \frac{1}{\epsilon} > \ln(1/\epsilon)$, $f(x) \in [\frac{1}{1+\epsilon},1] \subset [1-\epsilon,1]$. Thus, condition (2) is satisfied with $b = c = 1$, $d_1 = 1$, $d_2 = 0$,  and $e_1 = e_2 = 0$.
\end{itemize}

\paragraph{SoftPlus.} $f(x) = \ln(1+e^x)$. 
\begin{itemize}\itemsep0em 
    \item \textit{Condition 1}: We can compute $f''(x) = \frac{e^x}{(1+e^x)^2}$. Thus $\sup_x |f''(x)| = \sup_y |p(y)|$ where $p(x) = \frac{y}{(1+y)^2}$. We can check that this polynomial is maximized at $p(y) = \frac{1}{4}$ at $y = 1$. Thus condition (1) of Theorem \ref{thm:add-intro} is satisfied with $a = \frac{1}{4}$. 
    \item \textit{Condition 2}: We can also check that for any $\epsilon \in (0,1]$, when $x > \frac{1}{\epsilon} >{\ln(1/\epsilon)}$, $f(x) \ge x$ and $f(x) \le \ln((1+\epsilon)e^x) \le x + \ln(1+\epsilon) \le x + \epsilon$. Thus, $|f(x) - x| \le \epsilon$. Similarly, when $x < -\frac{1}{\epsilon} < \ln(\epsilon)$, $f(x) \ge 0$ and $f(x) \le \ln(1+\epsilon) \le \epsilon$. Thus $|f(x)| < \epsilon$. So, condition (2) is satisfied with $b = c = 1$, $d_1 = d_2 = 0$,  and $e_1 = 1$ and  $e_2 = 0$.
\end{itemize}

\paragraph{Gaussian.} $f(x) = e^{-x^2}$.
\begin{itemize}\itemsep0em 
    \item \textit{Condition 1}: We can verify that $f''(x) = e^{-x^2} (4x^2 -2)$, and has $\sup_x |f''(x)| = |f''(0)| = 2$. Thus condition (1) of Theorem \ref{thm:add-intro} is satisfied with $a = 2$.
    \item \textit{Condition 2}: We can also check that for any $\epsilon \in (0,1]$, when $|x| \ge \sqrt{\ln(1/\epsilon)} \le \frac{1}{\epsilon}$, $|f(x)|  \le \epsilon$, and thus condition (2) is satisfied with $b = c = 1$ and $d_1 = d_2 = e_1 = e_2 = 0$.
\end{itemize} 

\section{Relative Error Embeddings}\label{sec:rel}
We now show that the additive error embedding result of Theorem \ref{thm:add-intro} can be improved to relative error under the additional assumption that the nonlinearity $f$ is  close to linear near the origin. This assumption holds for a many `soft' step functions and rectifying units,  including Tanh, ArcTan, SoftSign, Square Nonlinearity (SQNL), and the Exponential Linear Unit (ELU).
\begin{reptheorem}{thm:rel-intro}
Let $S = \{y: y = f(x)\text{ for }x \in Z\}$, where $Z$ is a $k$-dimensional subspace of $\R^n$ and $f:\mathbb{R}\to \mathbb{R}$ is a nonlinearity satisfying conditions (1) and (2) of Theorem \ref{thm:add-intro} along with, for some constants $g_1,g_2,g_3$:
\begin{enumerate}
\item[3.] Linear Near Origin: For any $y$ with $|y| \le g_1$,  $| g_2 \cdot f^{-1}(y) - y| \le g_3 \cdot y^2$.
\end{enumerate}
    Then, if $\Pi \in \mathbb{R}^{m\times n}$ has i.i.d entries $\Pi_{ij} \sim \mathcal{N}(0,\tfrac{1}{m})$, and $m=O\left(\tfrac{k\log(n/\epsilon)+\log(1/\delta)}{\epsilon^2}\right)$ for $\epsilon,\delta \in (0,1]$, with probability at least $1-\delta$, $\Pi$ is an $\epsilon$-relative-error embedding for $S$.
\end{reptheorem}
\begin{proof}
Assume without loss of generality that $\epsilon < g_1$. If it is not, we can replace $\epsilon$ with $\min(g_1,\epsilon)$, and since $g_1$ is a fixed constant, this will affect the bound only by constants.
We split $S$ into two sets containing elements with relatively large norms and relatively small norms. Specifically, $S=S_L\cup S_U$ where $S_L = \{y\in S:\norm{y}_2>\epsilon/\sqrt{n}\}$ and $S_U = \{y \in S:\norm{y}_2\leq \epsilon/\sqrt{n}\}$. We then prove that with probability $1-\delta/2$, $\Pi$ is an $\epsilon$-relative-error embedding for each of $S_L$ and $S_U$. Via a union bound, this yields the theorem.

\paragraph{Case 1: $S_L$.} Since by assumption $f$ satisfies the requirements of Theorem \ref{thm:add-intro}, applying that theorem with $\epsilon_1 = \frac{\epsilon}{2}$ and $\epsilon_2 = \frac{\epsilon^2}{2\sqrt{n}}$ and  gives that, for $m = O\left(\frac{k\log(n/\epsilon) + \log(1/\delta)}{\epsilon^2}\right)$, with probability $1-\delta/2$, for all $y \in S_L$:
\begin{align*}
    \norm{\Pi y}_2 \leq (1+\tfrac{\epsilon}{2})\norm{y}_2 + \tfrac{\epsilon^2}{2\sqrt{n}} \leq (1+\epsilon)\norm{y}_2,
\end{align*}
where the second bound holds since for $y \in S_L$, $\norm{y}_2 \ge \frac{\epsilon}{\sqrt{n}}$ and thus $\frac{\epsilon^2}{2\sqrt{n}} \le \frac{\epsilon}{2} \norm{y}_2$.
Similarly, we have $\norm{\Pi y}_2 \ge (1-\epsilon) \norm{y}_2$, which completes the bound in this case.

\paragraph{Case 2: $S_U$.} We prove the theorem for $S_U$ using the fact $f$ is close to linear near the origin -- i.e., where $\norm{y}_2$ is small.
Let $\Tilde{f}(x) = g_2 \cdot x$ be a linear approximation to $f$ near the origin, i.e. for all $x$ such that $|x| < g_1, \Tilde{y} = \Tilde{f}(x)$. The approximation to $S$ thus becomes $\Tilde{S} = \{\Tilde{y}:\Tilde{y} = \Tilde{f}(x) \quad \text{ for } x \in Z\}$.
By assumption (3) of the theorem, for $y  \in S_U$, $\norm{y}_2 \le \frac{\epsilon}{\sqrt{n}}$ and thus for all $i \in \{1,2, \ldots n\}$, $|y(i)| \le \frac{\epsilon}{\sqrt{n}} < g_1$. This gives that:
\begin{align*}
|g_2 \cdot f^{-1}(y(i)) - y(i)| = | \tilde  y(i) - y(i)| \le g_3\cdot y(i)^2 \le \frac{g_3 \cdot \epsilon}{\sqrt{n}} \cdot y(i).
\end{align*}
In turn we have: 
\begin{align}\label{eq:ytildeBound}
\norm{y-\tilde y}_2 \le  \frac{g_3 \cdot \epsilon}{\sqrt{n}} \cdot \norm{y}_2.
\end{align}

Now, note that $\Tilde{S}$ is just a $k$-dimensional linear subspace. As discussed in the proof of Theorem \ref{thm:add-intro}, it is well known that for  $m = O \left (\frac{k+\log(1/\delta)}{\epsilon^2} \right  )$, with probability $\ge 1-\delta/2$, $\norm{\Pi}_2 \le 3\sqrt{n}$ and for all $\tilde y \in \tilde S$, $(1-\epsilon) \norm{\tilde y}_2 \le \norm{\Pi \tilde y}_2 \le (1+\epsilon) \norm{\tilde y}_2$ (i.e., $\Pi$ is an $\epsilon$-error subspace embedding for $\tilde S$). Along with \eqref{eq:ytildeBound}, these two conditions give that, for every $y  \in S$:
\begin{align*}
    \norm{\Pi y}_2 &\leq \norm{\Pi \Tilde{y}}_2 + \norm{\Pi(y-\tilde{y})}_2\\
    &\leq (1+\epsilon)\norm{\tilde{y}}_2 + \norm{\Pi}_2 \cdot \norm{y-\tilde{y}}_2\\
    &\leq (1+\epsilon)\norm{y}_2 + (1+\epsilon + \norm{\Pi}_2) \cdot \norm{y-\tilde{y}}_2\\
    &\leq (1+\epsilon)\norm{y}_2 + \frac{g_3 (1+\epsilon +3\sqrt{n}) \cdot \epsilon}{\sqrt{n}} \norm{y}_2  \le (1+c \epsilon) \norm{y}_2,
\end{align*}
for some constant $c$. Similarly, one can prove that $\norm{\Pi y}_2 \ge (1-c\epsilon) \norm{y}_2$. Thus, adjusting constants on $\epsilon$ by increasing $m$ by a constant gives that, with probability $1-\delta/2$, $\Pi$ is an $\epsilon$-relative-error embedding for $S$. Combined with our argument for Case 1 (the set $S_L$), this completes the proof.
\end{proof}

\subsection{Example Nonlinearities}

Many common neural network activation functions satisfy the assumptions of Theorem \ref{thm:rel-intro}. In particular, soft step functions and rectifying units (i.e., soft variants of the ReLU) often have linear asymptotes and are close to linear near the origin. We give two illustrative examples below: Tanh and ELU. Other nonlinearities, including ArcTan, SoftSign and the Square Nonlinearity (SQNL) are described in Appendix \ref{appendix:nonlinearities}.

\paragraph{Tanh (Hyperbolic Tangent).} $f(x) = \frac{e^x-e^{-x}}{e^x+e^{-x}}$
\begin{itemize}\itemsep0em 
    \item \textit{Condition 1}: We  can check that $\sup_x |f''(x)| = \frac{4}{3\sqrt{3}}$, achieved at $x = \frac{1}{2} \ln(2-\sqrt{3})$. Thus, condition (1) of Theorem \ref{thm:add-intro} is satisfied with $a = \frac{4}{3\sqrt{3}}$. 
    \item \textit{Condition 2}: For $x > \tfrac{1}{\epsilon} > \ln(1/\epsilon)$, we have $f(x) \le 1$ and $f(x) \ge \tfrac{1/\epsilon-\epsilon}{1/\epsilon+\epsilon} = \tfrac{1-\epsilon^2}{1+\epsilon^2} \ge 1-\epsilon.$ So $|f(x) -1| \le \epsilon$. Similarly, for $x < -\tfrac{1}{\epsilon} < \ln(\epsilon)$ we have $f(x) \ge -1$ and $f(x) \le \tfrac{\epsilon -1/\epsilon}{\epsilon+1/\epsilon} = \tfrac{-(1-\epsilon^2)}{1+\epsilon^2} \le 1-\epsilon.$ Thus, $|f(x) + 1| \le \epsilon$. Thus, condition (2) of Theorem \ref{thm:add-intro} is satisfied with $b = c = 1$.
    \item \textit{Condition 3}: $f^{-1}(y) = \frac{1}{2} \ln \left (\frac{1+y}{1-y} \right )$. We can check that $\frac{\left | \frac{1}{2} \ln \left (\frac{1+y}{1-y} \right ) - y \right |}{y^2} \le \frac{1}{5}$ for $y \in [-1/2,1/2]$. Thus, the final condition (3) of Theorem \ref{thm:rel-intro}  holds with $g_1 = 1/2$, $g_2 = 1$, and $g_3 = 1/5$.
\end{itemize}

\paragraph{Exponential Linear Unit (ELU).} $f(x) = \begin{cases}e^x - 1\text{ for } x \le 0 \\ x \text{ for } x \ge 0 \end{cases}$.
\begin{itemize}\itemsep0em 
    \item \textit{Condition 1}: For $x \ge 0$ we have $f''(x) = 0$. For $x \le 0$, we have $f''(x) = e^x \le 1$. Thus, 
    $\sup_x |f''(x)| \le 1$ and condition (1) of Theorem \ref{thm:add-intro} is satisfied with $a = 1$.
    \item \textit{Condition 2}: For $x>\frac{1}{\epsilon}$, we have $f(x) = x$ and thus, $|f(x) - x| = 0$. For $x<-\frac{1}{\epsilon}<-\ln(1/\epsilon)$, we have $| f(x) + 1| \le \epsilon$. Hence condition (2) of Theorem \ref{thm:add-intro} is satisfied with $b = c = 1$.
    \item \textit{Condition 3}: We have $f^{-1}(y) = \begin{cases}\ln(1+y)\text{ for } y \le 0 \\ y \text{ for } y \ge 0 \end{cases}$.\\ 
    We can check that  $\frac{|f^{-1}(y)-y|}{y^2}  \le 1$ for $y \in [-1/2,0]$ and $\frac{|f^{-1}(y)-y|}{y^2} = 0$ for $y > 0$. 
    Thus, condition (3) of Theorem \ref{thm:rel-intro} holds with $g_1 = 1/2, g_2 = 1$ and $g_3 = 1$.
\end{itemize}



\section{Application: Compressed Sensing from Generative Models}\label{sec:app}

Recently, deep generative models have become an important tool in the recovery of high-dimensional data from limited measurements using compressed sensing techniques \citep{bora2017compressed,rick2017one,shah2018solving}. 
They  have found significant success in solving linear inverse problems \citep{mccann2017convolutional}, offering a powerful alternative to the traditional structural assumption of sparsity. 

Formally, compressed sensing seeks to recover a signal $x \in \R^n$ from $m \ll n$ linear measurements, ${y} = {A}{x} + \eta$, where $A \in \R^{m \times n}$ is the measurement matrix and $\eta \in \R^m$ is some measurement noise. Recovering $x$ from $y$ requires solving this underdetermined and noisy linear system -- a task which is only possible under structural assumptions on $x$. Most commonly, in the \emph{sparse recovery} setting, it is assumed that $x$ is sparse in some basis, such as the Fourier or Wavelet basis \citep{donoho2006compressed}. Methods based on generative models instead assume that $x$ lies in the output span of some generative neural network $G:\mathbb{R}^k\to \mathbb{R}^n$. That is, $x$ lies in a low-dimensional subspace under a series of linear transformations and entrywise nonlinearities. 

\cite{bora2017compressed} extend the well-known restricted eigenvalue condition (REC) from sparse recovery, showing that, under the assumption that $x$ lies in some set $S$, as long as the objective function $\min_{x \in S} \norm{y - Ax}_2$ can be minimized to small additive error (e.g., via projected gradient descent), $x$ 
can be approximately recovered from any measurement matrix $A \in \R^{m \times n}$ satisfying the \emph{S-REC property}:
\begin{align}\label{eq:srec}\norm{A(x_1-x_2)} \geq (1-\epsilon_1)\norm{x_1-x_2} - \epsilon_2 \quad \forall x_1,x_2 \in S.
\end{align}

In turn, \cite{bora2017compressed} consider $S = \{x: x = G(z)\text{ for } z\in \R^k, \norm{z}_2 \le R \}$ -- the output span of a generative model $G$ under a bounded input restriction. They
 show that when $A \in \R^{m \times n}$ has i.i.d. $\mathcal{N}(0,1/m)$ entries, it satisfies \eqref{eq:srec} with high probability as long as $m = O \left (\frac{k\log(LR/\epsilon_2)}{\epsilon_1^2} \right )$, where $L$ is the Lipschitzness of $G$ (i.e., for any $z_1,z_2 \in \R^k$, $\norm{G(z_1)-G(z_2)} \leq L\norm{z_1-z_2}$). When $G$ uses just ReLU nonlinearities, the bounded radius and Lipschitz assumptions can be removed, $\epsilon_2 = 0$, and $m = O \left (\frac{d k\log n}{\epsilon_1^2} \right )$, where $d$ is the depth of the neural network.


\subsection{Our Results}

Our improved embedding results immediately apply to the setting of \cite{bora2017compressed}, letting us remove the dependence on the Lipschitz constant $L$ and the assumption of a bounded input $\norm{z}_2 \le R$ for two layer neural networks under the nonlinearities discussed in Sections \ref{sec:add} and \ref{sec:rel} (including the Sigmoid, Tanh, ELU, Softplus, etc.)


We employ a small modification of Theorem \ref{thm:add-intro}, which applies to the \emph{difference of two vectors} generated from a subspace under an entrywise nonlinearity. This theorem is proven essentially identically to Theorem \ref{thm:add-intro}.
\begin{theorem}[Additive Error Embedding -- Distance]\label{thm:add-diff}
Let $S = \{y: y = f(x)\text{ for }x \in Z\}$, where $Z$ is a $k$-dimensional subspace of $\R^n$ and $f:\mathbb{R}\to \mathbb{R}$ is a nonlinearity satisfying the conditions of Theorem \ref{thm:add-intro}.
    Then, if $\Pi \in \mathbb{R}^{m\times n}$ has i.i.d entries $\Pi_{ij} \sim \mathcal{N}(0,\tfrac{1}{m})$, and $m=O\left(\tfrac{k\log(n/\epsilon_2)+\log(1/\delta)}{\epsilon_1^2}\right)$ for $\epsilon_1,\epsilon_2,\delta \in (0,1]$, with probability at least $1-\delta$, for all $y_1,y_2 \in S$:
    \begin{align*}
    (1-\epsilon_1) \norm{y_1-y_2}_2 - \epsilon_2  \le \norm{\Pi (y_1-y_2)}_2 \le (1+\epsilon_1) \norm{y_1-y_2}_2 + \epsilon_2.
    \end{align*}
\end{theorem}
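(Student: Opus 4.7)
The plan is to follow the template of the proof of Theorem \ref{thm:add-intro} essentially verbatim, with the one substantive change that I now need an oblivious subspace embedding that simultaneously preserves the norms of all \emph{differences} $\tilde y_1 - \tilde y_2$ coming from the piecewise linear approximation $\tilde f$ of $f$. First, I would invoke Lemma \ref{lemm:rolle} (with approximation parameter $\epsilon_2/n$) to produce a piecewise linear $\tilde f$ with $t = O\bigl((n/\epsilon_2)^{b+1/2}\bigr)$ pieces satisfying $|f(x)-\tilde f(x)| \le \epsilon_2/n$ for all $x \in \R$. Define $\tilde S = \{\tilde f(x) : x \in Z\}$.

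Next I would adapt the union-of-subspaces argument. By Lemma \ref{countregions}, $\tilde S$ is contained in a union of $w = O((tn)^k)$ linear subspaces $T_1,\dots,T_w$, each of dimension at most $k$. The set of pairwise differences therefore satisfies
\begin{equation*}
\tilde S - \tilde S \subseteq \bigcup_{i,j \in [w]} (T_i - T_j),
\end{equation*}
which is a union of $w^2 = O((tn)^{2k})$ subspaces, each of dimension at most $2k$. Applying the standard Gaussian OSE result on each of these $2k$-dimensional subspaces with failure probability $\delta/(2w^2)$ and taking a union bound, I get that with probability at least $1-\delta/2$, $\Pi$ is an $\epsilon_1$-relative-error subspace embedding for every $T_i - T_j$, and hence for every vector in $\tilde S - \tilde S$. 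The required embedding dimension is
\begin{equation*}
m = O\!\left(\tfrac{2k + \log(w^2/\delta)}{\epsilon_1^2}\right) = O\!\left(\tfrac{k\log(nt) + \log(1/\delta)}{\epsilon_1^2}\right) = O\!\left(\tfrac{k\log(n/\epsilon_2) + \log(1/\delta)}{\epsilon_1^2}\right),
\end{equation*}
matching the bound claimed in the theorem (the doubling of $k$ disappears into the constant).

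Finally, I would transfer the embedding from $\tilde S - \tilde S$ to $S - S$ using exactly the triangle-inequality plus spectral-norm argument from the proof of Lemma \ref{lem:th2proof}. For any $y_1, y_2 \in S$, let $\tilde y_i = \tilde f(x_i)$ be the corresponding approximations, so that $\|y_i - \tilde y_i\|_2 \le \epsilon_2/\sqrt{n}$ and hence $\|(y_1-y_2) - (\tilde y_1 - \tilde y_2)\|_2 \le 2\epsilon_2/\sqrt{n}$. Using $\|\Pi\|_2 \le 3\sqrt{n}$ (which holds with probability $\ge 1-\delta/2$ by the same concentration bound used before) together with the embedding on $\tilde y_1 - \tilde y_2$, the triangle inequality gives
\begin{equation*}
\|\Pi(y_1-y_2)\|_2 \le (1+\epsilon_1)\|y_1-y_2\|_2 + O(\epsilon_2),
\end{equation*}
and symmetrically for the lower bound. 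Adjusting constants inside the $O(\cdot)$ by enlarging $m$ by a constant factor yields the theorem.

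The only step that is even slightly non-routine is the subspace count for the difference set, and it is harmless because taking Minkowski differences of $w$ subspaces yields $w^2$ subspaces of at most doubled dimension, so the logarithm $\log(w^2) = 2\log w$ changes the embedding dimension only by a constant. Everything else is a direct reuse of the Gaussian OSE property, the spectral norm tail bound, and Lemmas \ref{countregions}, \ref{lemm:rolle}, and \ref{lem:th2proof} already established in the paper.
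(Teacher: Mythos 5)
Your proposal is correct and follows essentially the same route as the paper, which simply asserts that Theorem \ref{thm:add-diff} is ``proven essentially identically to Theorem \ref{thm:add-intro}.'' The one detail you add explicitly --- that the difference set $\tilde S - \tilde S$ lies in a union of $w^2$ subspaces of dimension at most $2k$, costing only a constant factor in $\log(w^2) = 2\log w$ --- is exactly the right way to fill in what the paper leaves implicit.
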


Now, let $G:\R^k\to\R^n$ be a two layered generative neural network with $G(z) = f(Wz)$ for some weight matrix $W \in \R^{n \times k}$ and some nonlinearity $f$ satisfying the conditions of Theorem \ref{thm:add-intro}. Let $S$ be the output set of $G$: $S = \{x \in \R^n: x = G(z)\text{ for } z \in \R^k\}$. Then Theorem \ref{thm:add-diff} implies that, when $A$ has random Gaussian entries, it satisfies the restricted eigenvalue condition of \eqref{eq:srec}, and thus, $x$ can be recovered from noisy measurements $y = Ax + \eta$. In comparison to the result of \cite{bora2017compressed}, $m=O\left(\tfrac{k\log(n/\epsilon_2)+\log(1/\delta)}{\epsilon_1^2}\right)$ has no dependence on the Lipschitzness $L$ of $G(z)$. Additionally, the bound holds under the weaker assumption that $S$ is $G$'s full output set, rather than the outputs restricted to the range of bounded diameter inputs.

\subsection{Extension to deep networks}

Our results apply to depth-2 neural networks, and an important direction for future work is to extend them to general depth-$d$ networks. In this section, we give an example of how our techniques can be applied to deeper networks.

Let $G:\R^k\to\R^n$ be a neural network with $d$ layers and $\le n$ nodes per non-input layer. The previously mentioned results of \cite{bora2017compressed} show that when $A\in \R^{n\times m}$ has i.i.d entries $A_{ij}\sim \mathcal{N}(0,\tfrac{1}{m})$, it satisfies the S-REC property of \eqref{eq:srec} for $S = \{x: x = G(z)\text{ for } z\in \R^k, \norm{z}_2 \le R \}$ and $m = O\left(\frac{k \log(LR/\epsilon_2)}{\epsilon_1^2})\right)$. We extend this result, showing how to remove the norm restriction on the representation $z$ for nonlinearities that satisfy the conditions of  Theorem \ref{thm:add-intro} and are bounded in magnitude by some constant $u$. This includes all soft step functions we consider, such as the Sigmoid, Tanh and SoftStep.


We split $G$ into the composition of two functions: $G_1:\R^k\to \R^{n}$ mapping the input layer to the second layer and $G_2:\R^{n}\to\R^n$, mapping the second layer to the output. Assume that $G_2$ is $L$-Lipschitz Note that $G_1(z) = f(W_1z)$, where $W_1$ is the weight matrix of the first layer and $f$ is the nonlinearity. Let $\Tilde{G_1}:\R^k\to\R^n$ be an approximation to $G_1$ which uses a piecewise linear approximation $\tilde f$ with $|\tilde f(x)- f(x)| \le \frac{\epsilon_2}{n L} \, \forall x$. The existence of $\tilde f$ with  $t =  O \left (\left (\frac{nL}{\epsilon_2}\right )^{b+1/2}\right )$ pieces is guarantee by Lemma \ref{lemm:rolle}. We have for any $z \in \R^k$, $\norm{G_1(z) - \tilde G_1(z)}_2 \le \frac{\epsilon_2}{\sqrt{n} L}$.

Let $\tilde G(z) = G_2(\tilde G_1(z))$. By our Lipschitzness assumption on $G_2$, for all $z$, 
\begin{align}\label{eq:lbound}\norm{G(z) - \tilde G(z)} = \norm{G_2(G_1(z)) - G_2(\tilde G_1(z))}_2 \le L \cdot \norm{G_1(z) - \tilde G_1(z)}_2 \le \frac{\epsilon_2}{\sqrt{n}}.
\end{align}
Additionally, by Lemma \ref{countregions}, the output of $\Tilde{G_1}(z)$ lies in the union of $(nt)^k$ $k$-dimensional linear subspaces. Since we assume $f(x) \le u$ for all $x$, $\tilde f(x) \le u+ \frac{\epsilon_2}{nL}$ for all $x$. Thus $\norm{\Tilde{G_1}(z)}_2 \le (u+ \frac{\epsilon_2}{nL}) \cdot \sqrt{n} = O(\sqrt{n})$.
 Thus, the output of $\tilde G(z)$ lies in the union of $t$ regions of the form $S = \{G_2(z'): z' \in Z, \norm{z'}_2 = O(\sqrt{n}) \}$, where $Z$ is a $k$-dimensional subspace. We know via the results of  \cite{bora2017compressed} and a union bound over these $t$ regions that for $m = O \left (\frac{k\log (Ln/\epsilon_2) + \log 1/\delta}{\epsilon_1^2} \right )$,
  with probability $\ge 1-\delta$, for any $\tilde x_1,\tilde x_2 \in \R^n$ in the approximate output set $\tilde S = \{\tilde G(z) : z \in \R^k \}$,
 $$\norm{A(\tilde x_1-\tilde x_2)}_2 \geq (1-\epsilon_1)\norm{\tilde x_1-\tilde x_2}_2 -\epsilon_2.$$
 For any $x_1, x_2 \in \R^n$ in the true output set $S = \{\tilde G(z) : z \in \R^k \}$ via \eqref{eq:lbound} we thus have, following the proof of Lemma \ref{lem:th2proof}:
 \begin{align*}
 \norm{A(x_1-x_2)}_2 &\ge  \norm{A(\tilde x_1-\tilde x_2)}_2 -\norm{A}_2 \cdot \frac{2\epsilon_2}{\sqrt{n}}\tag{triangle inequality}\\
 &\ge  (1-\epsilon_1)\norm{\tilde x_1-\tilde x_2}_2 -\epsilon_2 - O(\epsilon_2) \tag{$\norm{A}_2 = O(\sqrt{n})$ with high probability}\\
 &\ge (1-\epsilon_1)\norm{x_1-x_2}_2 -O(\epsilon_2) \tag{triangle inequality}
 \end{align*}
 Adjusting constants on $\epsilon_2$, this gives us the S-REC property of \eqref{eq:srec} for $S = \{G(z): z \in \R^{k}\}$ when $A$ makes $m = O \left (\frac{k\log (Ln/\epsilon_2) + \log 1/\delta}{\epsilon_1^2} \right )$ measurements. Thus, for any Lipschitz neural network using bounded linearities satisfying the assumptions of Theorem \ref{thm:add-intro}, we obtain a similar result to \cite{bora2017compressed} but without the bounded input assumption.


\subsection{Conclusions and Future Work}

Our paper makes initial steps in building a systematic understanding of randomized dimensionality reduction for subspaces under entrywise nonlinear transformations. An important next step is to extend our results to the output spaces of neural networks with $d > 2$ layers. It is possible to use an argument similar to Theorem \ref{thm:add-intro} to give some bounds here, by approximating all nonlinearities in the neural network via piecewise linear functions. However, due to compounding error at each level, $\epsilon_2$ must be set very small at the first level, leading to relatively weak embedding bounds. Understanding how to avoid this compounding error would be very  interesting.

As discussed, it would also be interesting to apply Rademacher and other complexity bounds for learning neural networks to understanding the compressibility of their output spaces and to give low-distortion embedding bounds. This would let us leverage an even richer class of tools in proving embedding bounds. 

\bibliographystyle{abbrv}
\bibliography{ref} 
\clearpage

\appendix
\section{Example Nonlinearities for Relative Error Embeddings}\label{appendix:nonlinearities}

We now give a number of other examples of nonlinearities that satisfy the assumptions of our relative error embedding result, Theorem \ref{thm:rel-intro}.

\paragraph{ArcTan.}$f(x) = \tan^{-1}(x)$
\begin{itemize}\itemsep0em 
    \item \textit{Condition 1}: . We can check that $\sup_x |f''(x)| = \frac{3\sqrt{3}}{8}$ achieved at $|x| = \frac{1}{\sqrt{3}}$. Thus, condition (1) of Theorem \ref{thm:add-intro} is satisfied with $a = \frac{3\sqrt{3}}{8}$.
    \item \textit{Condition 2}: We use a series expansion which gives that:
   $$ \tan^{-1}(x) = \begin{cases}x - \tfrac{x^3}{3}+ \tfrac{x^5}{5}   -\tfrac{x^7}{7}+.. \text{ for } |x| \le 1 \\
			\tfrac{\pi}{2}-\tfrac{1}{x} + \tfrac{1}{3x^3} - .. \text{ for } x \ge 1\\
			-\tfrac{\pi}{2}-\tfrac{1}{x} + \tfrac{1}{3x^3} - .. \text{ for } x \le -1 \end{cases}.$$

    For $x \ge \frac{1}{\epsilon}$, we thus have  $f(x)\le \frac{\pi}{2}$ and $f(x) \ge \frac{\pi}{2} - \epsilon$. Thus $|f(x) - \frac{\pi}{2}| \leq \epsilon$. Similarly, for $x \le -\frac{1}{\epsilon}$, we have $|f(x) + \frac{\pi}{2}| \leq \epsilon$. Thus, condition (2) of Theorem \ref{thm:add-intro} is satisfied with $b = c = 1$.
    \item \textit{Condition 3}: $f^{-1}(y) = \tan(y)$ for $y \in (-\frac{\pi}{2},\frac{\pi}{2})$. We can check that when $|y| \le 1$, $\frac{|tan(y) - y|}{y^2} \le \tan(1)-1 \le .56$. Thus, condition (3) of  Theorem \ref{thm:rel-intro} holds with $g_1 = 1, g_2 = 1$ and $g_3 = .56$. 
\end{itemize}

\paragraph{SoftSign.} $f(x) = \frac{x}{1+|x|}$.
\begin{itemize}\itemsep0em 
    \item \textit{Condition 1}: It can be checked that $f''(x) = \frac{2 x}{(1 + |x|)^3} - \frac{2 |x|}{x (1 + |x|)^2}$ and  $\sup_x |f''(x)| = 2$, achieved at $x = 0$. Thus, condition (1) of Theorem \ref{thm:add-intro} is satisfied with $a = 2$.
    \item \textit{Condition 2}: For $x>\frac{1}{\epsilon}$, we have $f(x) \le 1$ and $f(x) \ge 1-\frac{1}{1+x} \ge 1 - \epsilon$. Thus, $|f(x) - 1| \le \epsilon$. Similarly, for $x<-\frac{1}{\epsilon}$, we have $f(x) \ge -1$ and $f(x) \le -1 + \frac{1}{1-x} \le -1 + \epsilon$. Thus, $|f(x) + 1| \le \epsilon$. Hence condition (2) of Theorem \ref{thm:add-intro} is satisfied with $b = c= 1$.
    \item \textit{Condition 3}: We have $f^{-1}(y) = \frac{y}{1-|y|}$. It can be checked that $\frac{|f^{-1}(y)-y|}{y^2} \le 2$ when $|y| \le 1/2$. Thus, condition (3) of Theorem \ref{thm:rel-intro} holds for for $g_1 = 1/2$, $g_2 = 1$ and $g_3 = 2$.
\end{itemize}

\paragraph{Square Nonlinearity (SQNL).} Here $f(x) = \begin{cases}1 \text{ for } x \ge 2 \\ x-\frac{x^2}{4} \text{ for } x \in [0,2] \\ x+ \frac{x^2}{4}  \text{ for } x \in [-2,0] \\ -1 \text{ for } x \le 2\end{cases}$.
\begin{itemize}\itemsep0em 
    \item \textit{Condition 1}: $f''(x) = 0$ for $x \notin [-2,2]$, $f''(x) = -\frac{1}{2}$ for $x \in [0,2]$ and $f''(x) = \frac{1}{2}$ for $x \in [-2,0]$ Thus, $\sup_x |f''(x)| = \frac{1}{2} $ and so condition (1) of Theorem \ref{thm:add-intro} is satisfied with $a = \frac{1}{2}$.
    \item \textit{Condition 2}: For $x\ge \frac{1}{\epsilon}$, we have $f(x) = 1$ and hence, $|f(x) - 1| = 0$. For $x\le -\frac{1}{\epsilon}$, we have $f(x) = -1$ and hence $|f(x)+1| = 0$. Hence condition (2) of Theorem \ref{thm:add-intro} is satisfied with $b = c = 1$.
    \item \textit{Condition 3}: $f^{-1}(y) = \begin{cases} 2-2\sqrt{1-y} \text{ for } y \in [0,2] \\ -2+2\sqrt{1+y}  \text{ for } x \in [-2,0] \end{cases}$.\\
    We can check that $\frac{|f^{-1}(y)-y|}{y^2} \le 1$ for $y \in [-1/2,1/2]$, which gives that condition (3) of Theorem \ref{thm:rel-intro} holds for for $g_1 = 1/2$, $g_2 = 1$ and $g_3 = 1$.
\end{itemize}

\end{document}